\newcommand{\deriv}{{\mathrm{d}}}
\def\eqref#1{equation~\ref{#1}}
\def\1{\bm{1}}
\def\rvv{{\mathbf{v}}}
\def\rvw{{\mathbf{w}}}
\def\rvx{{\mathbf{x}}}
\def\rvy{{\mathbf{y}}}
\def\vtheta{{\bm{\theta}}}
\def\ve{{\bm{e}}}
\def\vf{{\bm{f}}}
\DeclareMathAlphabet{\mathsfit}{\encodingdefault}{\sfdefault}{m}{sl}
\SetMathAlphabet{\mathsfit}{bold}{\encodingdefault}{\sfdefault}{bx}{n}
\newcommand{\ptarget}{p_{\rm{target}}}
\newcommand{\pprior}{p_{\rm{prior}}}
\newcommand{\R}{\mathbb{R}}
\theoremstyle{plain}
\newtheorem{theorem}{Theorem}[section]
\theoremstyle{definition}
\theoremstyle{remark}
\icmltitlerunning{Single-Step Consistent Diffusion Samplers}
\begin{document}

\twocolumn[
\icmltitle{Single-Step Consistent Diffusion Samplers}



\icmlsetsymbol{equal}{*}

\begin{icmlauthorlist}
\icmlauthor{Pascal Jutras-Dubé}{yyy}
\icmlauthor{Patrick Pynadath}{yyy}
\icmlauthor{Ruqi Zhang}{yyy}
\end{icmlauthorlist}

\icmlaffiliation{yyy}{Department of Computer Science, Purdue University, West Lafayette, USA}

\icmlcorrespondingauthor{Pascal Jutras-Dubé}{pjutrasd@purdue.edu}
\icmlcorrespondingauthor{Ruqi Zhang}{ruqiz@purdue.edu}


\vskip 0.3in
]



\printAffiliationsAndNotice{}  

\begin{abstract}
Sampling from unnormalized target distributions is a fundamental yet challenging task in machine learning and statistics.
Existing sampling algorithms typically require many iterative steps to produce high-quality samples, leading to high computational costs that limit their practicality in time-sensitive or resource-constrained settings.
In this work, we introduce \emph{consistent diffusion samplers}, a new class of samplers designed to generate high-fidelity samples in a single step.
We first develop a distillation algorithm to train a consistent diffusion sampler from a pretrained diffusion model without pre-collecting large datasets of samples.
Our algorithm leverages incomplete sampling trajectories and noisy intermediate states directly from the diffusion process.
We further propose a method to train a consistent diffusion sampler from scratch, fully amortizing exploration by training a single model that both performs diffusion sampling and skips intermediate steps using a self-consistency loss.
Through extensive experiments on a variety of unnormalized distributions, we show that our approach yields high-fidelity samples using less than 1\% of the network evaluations required by traditional diffusion samplers.
\end{abstract}

\section{Introduction}\label{sec:introduction}
Sampling from densities of the form 
\begin{equation}
    \ptarget = \frac \rho Z, \quad \text{with} \quad Z = \int_{\mathbb R^d} \rho(\rvx)\deriv \rvx 
\end{equation}
with $\rho$ evaluable pointwise but $Z$ intractable, is a central problem in machine learning \citep{neal1995bayesian, hernandez2015probabilistic} and statistics \citep{neal2001annealed, andrieu2003mcmc}, and has applications in scientific fields like physics \citep{wu2019solving, albergo2019flow, No2019BoltzmannGS}, chemistry \citep{frenkel2002molecularsimulation, hollingsworth2018moleculardynamics, holdijk2024ocmolecule}, and many other fields involving probabilistic models. 

Many established sampling algorithms are inherently iterative, with the accuracy of the final samples depending heavily on the number of steps. 
Classical Markov chain Monte Carlo (MCMC) methods asymptotically converge to the target distribution as the number of steps goes to infinity\citep{mackay2003mcmcbook, robert1995convergencemcmc}, while more recent diffusion-based approaches \citep{zhang2022pis, vargas2023dds, berner2024dis} guarantee convergence in a finite number of steps but often necessitate hundreds of iterations to yield high-quality samples. 
Such iterative samplers tend to suffer from slow mixing, making them impractical for use in large models and resource-limited scenarios.

Recent work on diffusion generative models \citep{sohl2015thermo, ho2020ddpm, song2019smld, song2021sde} have proposed fewer-step sampling via more efficient differential equation solvers \citep{song2021ddim, jolicoeurmartineau2021gotta, karras2022edm} or knowledge distillation \citep{salimans2022progdist, song2023consistency}, which enables single-step generation.
However, directly applying these distillation techniques to unnormalized distributions is challenging, as it often requires large datasets of samples that may be expensive to collect.
This motivates the following question:

\emph{Can we significantly reduce the steps required by samplers, enabling few-step or even single-step sampling?}

In this paper, we propose \emph{consistent diffusion samplers} to produce high-quality samples in a single step.
We first show that diffusion-based samplers can be \emph{consistently distilled} into single-step diffusion samplers. 
Instead of storing a large dataset of fully diffused samples, our approach exploits incomplete trajectories and noisy samples encountered during the diffusion process.
We further introduce a \emph{self-consistent} diffusion sampler that does not require a pretrained diffusion sampler.
Instead, it fully amortizes exploration by jointly learning both diffusion sampling and large cut off steps that match the outcome of paths of small steps.
This enables single-step sampling yet retains the option to refine samples through multiple iterations if desired, subsuming existing diffusion-based approaches.

Our contributions can be summarized as follows:
\begin{itemize}[itemsep=2pt, topsep=0pt]
\item We show that diffusion-based samplers for unnormalized distributions can be effectively distilled into single-step consistent samplers without pre-collecting large datasets of samples.
\item We introduce a self-consistent diffusion sampler that learns to perform single-step sampling by jointly training diffusion-based transitions and large shortcut steps via a self-consistency criterion. 
This method only trains one neural network and does not require pretrained samplers or high-quality data.
\item Through extensive evaluations on synthetic and real unnormalized distributions, we demonstrate that our method delivers competitive sample quality while drastically reducing sampling steps.
\end{itemize}

\section{Related Work}\label{sec:related-work}
\paragraph{Markov chain Monte Carlo (MCMC)}
Markov chain Monte Carlo methods are a classical approach for sampling from unnormalized target densities. 
The key idea is to construct a Markov chain whose stationary distribution matches the target distribution \citep{brooks2012handbookmcmc}. 
Prominent examples include the Metropolis-Hastings algorithm \citep{metropolis1953equation, hastings1970monte}, Gibbs sampling \citep{geman1984gibbs}, and Langevin dynamics \citep{rossky1978langevin, parisi1981langevin}. 
By exploiting geometric structure in the target distribution, Hamiltonian Monte Carlo \citep{duance1987hmc, mackay2003mcmcbook, brooks2012handbookmcmc, chen2014stochastic} often leads to more efficient exploration. 
To address scalability challenges in high-dimensional or large-dataset scenarios, stochastic gradient MCMC variants \citep{welling2011langevin, chen2014stochastic, zhang2020amagold, zhang2019cyclical} have been introduced. 
Although these MCMC methods reduce per-step computational costs or improve mixing, they remain inherently iterative, requiring many transitions to yield high-quality samples.

\paragraph{Learning-Based Samplers}
Amortized inference shifts the computational overhead from test-time sampling to a training phase, allowing for faster inference \citep{gershman2014amortized}. 
Approaches such as amortized MCMC~\citep{li2017amortizedmcmc} train a neural network to mimic the distribution of samples obtained after $T$ transitions of a traditional MCMC process. 
Similarly, GFlowNets \citep{bengio2021gflownets, bengio2023foundations} learn to sequentially construct complex discrete objects, effectively learning a sampling strategy. 
While GFlowNets amortize the computational challenges of lengthy stochastic searches and mode-mixing
during training, their sampling process remains sequential, as objects are constructed step-by-step
through a series of constructive steps.

An alternative viewpoint casts the sampling problem as an optimal control task \citep{zhang2022pis, berner2024dis, richter2024improved}, where one trains a controlled stochastic differential equation to transport an initial distribution to the target via a Schrödinger bridge \citep{schrodinger1931umkehrung, schrodinger1932relativiste}. 
This perspective motivates recent efforts to use diffusion-based samplers \citep{geffner2023langevin, vargas2023dds, zhang2024dgfs, phillips2024particle, chen2025SCLD}. 
While such diffusion and flow-based frameworks have advanced the state of the art, they require numerical solvers operating on dense time discretizations.

\paragraph{Consistent Generative Models}
Recent work in generative modeling has explored the concept of consistency: ensuring that large transitions between observed distributions are consistent with sequences of incremental transformations. 
Consistency models \citep{song2023consistency, song2023improved, lu2025simplifying} 
learn a direct mapping from any point in time to the terminal state. 
Progressive distillation \citep{salimans2022progdist, meng2023distillation} incrementally distills a trained diffusion model into a more efficient version that takes half as many until a single-step model is achieved.
Similarly, shortcut models \citep{liu2023flowstraight, frans2025shortcut} leverage progressive self-distillation during training to achieve accelerated inference without relying on a pre-trained teacher model.

These methods focus on generative modeling tasks and assume access to a dataset drawn from the target distribution.
Our work introduces the notion of consistency into the setting of sampling from unnormalized densities. 
We assume access only to an unnormalized pointwise oracle $\rho$ for the target density, without requiring any pre-collected samples.

\section{Preliminaries: Diffusion-Based Samplers}\label{sec:background}
Diffusion-based samplers are controlled stochastic differential equations (SDEs) that transport samples from a simple prior distribution $\pprior$ to the target distribution $\ptarget$.
Consider a forward-time SDE over $t \in [0,T]$ with initial condition $\rvx_0 \sim \pprior$:
\begin{equation}\label{eq:generative-sde} 
     \deriv \rvx_t = \bigl(\mu(t)\rvx_t + g(t)u_\theta(\rvx_t, t))\bigr) \deriv t + g(t)  \deriv \rvw_t,
\end{equation} 
where $\rvw$ is a standard Brownian motion, $\mu$ is the drift term, $g$ is the diffusion coefficient, and $u_\theta$ is a learned control term parameterized by a neural network.

Further consider the time-reversal process $\rvy$ of a diffusion that gradually adds noise to samples from the target distribution:
\begin{equation}\label{eq:target-sde}
\deriv \rvy_t = \bigl(\mu(t)\rvy_t + g^2(t)\nabla\log p_{\rvy_t}(\rvy_t)\bigr)\deriv t + g(t)\deriv \rvw_t.
\end{equation}
If we choose $\rvy_0 \sim \pprior$ and $\mu$ and $g$ such that $\rvy_T \sim \ptarget$, then setting $u_\theta(\rvx_t, t) = g(t)\nabla\log p_{\rvy_t}(\rvx_t)$ in Eq.~\ref{eq:generative-sde} would yield $p_{\rvx_t}=p_{\rvy_t}$ and thus $\rvx_T \sim \ptarget$ \citep{Anderson1982ReversetimeDE}.
In practice, however, the score function $\nabla\log p_{\rvy_t}$ is unknown and must be approximated by training $u_\theta$.

Let $\mathbb{P}_{\rvx}$ denote the path space measure induced by the SDE in Eq.~\ref{eq:generative-sde}, and $\mathbb{P}_{\rvy}$ the path space measure for the time-reversed process in Eq.~\ref{eq:target-sde}. Further, let $\mathcal{U}\subset C\bigl(\R^d\times[0,T],\R^d\bigr)$ be a space of admissible controls. 
From an optimal control and path space perspective \citep{berner2024dis, richter2024improved}, the diffusion sampling problem can be framed as finding an optimal control $u^*$ that minimizes a divergence between these two path measures:
\begin{equation}\label{eq:sampling-problem}
u^* \in \underset{\mathcal{U}}{\arg\min}\; D(\mathbb{P}_{\rvx} \,\|\, \mathbb{P}_{\rvy}),
\end{equation}
where $D(\cdot\,\|\,\cdot)$ is an appropriate divergence.  

To evaluate $D(\mathbb{P}_{\rvx} \,\|\, \mathbb{P}_{\rvy})$, one requires the Radon–Nikodym derivative, which measures how much more likely a given trajectory $\rvv$ is under $\mathbb{P}_{\rvx}$ than under $\mathbb{P}_{\rvy}$:
\begin{equation}\label{eq:RN}
    \frac{\deriv\mathbb{P}_\rvx}{\deriv\mathbb{P}_{\rvy}}(\rvv) = Z \exp\bigl( R(\rvv) + S(\rvv) + B(\rvv)\bigr)
\end{equation}
where
\begin{align*}
    &R(\rvx) = \int_0^T \left(\tfrac12\|u_\theta(\rvx_t, t)\|^2 - \operatorname{div}(\mu(t)\rvx_t)\right)\deriv t,\\
    &S(\rvx) = \int_0^T u_\theta(\rvx_t, t) \deriv \rvw_t, \quad \text{and}\\
    &B(\rvx) = \log\frac{\pprior(\rvx_0)}{\rho(\rvx_T)}.
\end{align*}

Two widely used divergences in diffusion-based sampling are:
\begin{align}
    &D_\text{KL}(\mathbb{P}_{\rvx} \,\|\, \mathbb{P}_{\rvy}) = \mathbb E \bigl[R(\rvx) + B(\rvx)\bigr] + \log Z \label{eq:kl};\\
    &D_\text{LV}(\mathbb{P}_{\rvx} \,\|\, \mathbb{P}_{\rvy}) = \mathbb{V}\bigl[R(\rvx) + S(\rvx) + B(\rvx)\bigr].\label{eq:lv} 
\end{align} 

Here, $D_{\mathrm{KL}}$ is the Kullback–Leibler divergence \citep{zhang2022pis,vargas2023dds,berner2024dis}, and $D_{\mathrm{LV}}$ is the log-variance divergence \citep{richter2024improved}. 

Once trained, the control $u_\theta$ allows for generating samples from $\ptarget$ by simulating the forward SDE in Eq.~\ref{eq:generative-sde}.
In practice, numerical discretization $0 = t_1 < t_2 < \ldots < t_N = T$ is required, and finer time steps yield more accurate sampling but at higher computational cost. 
Thus, a key challenge lies in balancing step size against the desired accuracy and efficiency.

\section{Consistency Distilled Diffusion Samplers}\label{sec:CDDS}
In this section, we show how to adapt consistency distillation to the problem of sampling from unnormalized densities.
We name our method the \emph{consistency distilled diffusion sampler} (CDDS).
The next section will address how to remove the requirement of having a pre-trained diffusion sampler.

Our goal is to learn a consistency function $f: (\rvx_t, t) \mapsto \rvx_T$, which maps any intermediate state $\rvx_t$ directly to a sample $\rvx_T$ from the target distribution. 
Although we lack a dataset of samples from $\ptarget$, if we possess a pre-trained diffusion sampler, we can approximate such a dataset by simulating the generative SDE in Eq.~\ref{eq:generative-sde}, producing samples $\{\hat{\rvx}_T^i\}_{i=1}^M$. 
We can then apply either consistency distillation or consistency training (as in Algorithms 2 and 3 of \citealp{song2023consistency}) to learn $f$.
This approach is expensive as it necessitates pre-collecting and storing a large dataset.

Consider a pre-trained diffusion process whose trajectories
$\rvx_{t_1}, \rvx_{t_2}, \ldots, \rvx_{T}$ would normally be used to create a dataset for distillation.
Instead, we directly leverage intermediate states $\rvx_t$ during each training iteration.
This reduces storage demands and limits the accumulation of numerical errors that could arise from fully integrating the numerical solver.
If the error per step of an order-$p$ solver is bounded by $O((t_{n+1} - t_n)^{p+1})$, using multiple, shorter intervals can help keep the overall global error smaller.

One challenge in using intermediate states from a stochastic diffusion is the inherent randomness of the SDE trajectory, which complicates the mapping $(\rvx_t, t) \mapsto \rvx_T$. 
To address this, we simulate the associated probability flow (PF) ODE \citep{song2021sde}:
\begin{equation}\label{eq:pf-ode-forward}
\odif\rvx_t
= \Bigl(\mu(\rvx_t, t) + \tfrac12 \sigma(t),u(\rvx_t, t)\Bigr)\odif t,
\end{equation}
which shares the same marginal distributions as the original SDE but follows a deterministic trajectory. 
Integrating the PF ODE at discrete times $t_n$ and $t_{n+1}$ gives intermediate points $\hat{\rvx}_{t_n}$ and $\hat{\rvx}_{t_{n+1}}$, which we use for training.

We minimize the discrepancy between the outputs of the consistency function at consecutive intermediate states:
\begin{equation}\label{eq:cdloss}
\begin{aligned}
    \mathcal{L}_\text{CD} &(\vtheta, \vtheta^\prime; u)\\
&:= \mathbb{E}\Bigl[\lambda(t_n) d\bigl(f_{\vtheta^\prime}(\hat{\rvx}_{t_{n+1}}, t_{n+1}), f_{\vtheta}(\hat{\rvx}_{t_n}, t_n)\bigr)\Bigr],
\end{aligned}
\end{equation}
where $d(\cdot,\cdot)$ is a distance metric, $\lambda(\cdot)$ is a positive weighting function, and $\vtheta^{\prime} = \operatorname{stopgrad}(\vtheta)$ indicates that the gradients are not passed through the target term. 
Notably, different to training consistency generative models, here, both $\hat{\rvx}_{t_{n+1}}$ and $\hat{\rvx}_{t_n}$ are approximate states obtained by partially integrating the PF ODE. 
Training a consistent diffusion sampler via distillation requires a similar computational cost as training the original diffusion sampler, since both processes involve simulating trajectories; however, it enables faster inference at test time.
The training procedure is summarized in Algorithm~\ref{alg:cd} and illustrated in Figure~\ref{fig:cd}. 

\begin{algorithm}[tb]
\caption{Data-Free Consistency Distillation}\label{alg:cd}
\begin{algorithmic}
\STATE \textbf{Input:} model parameters $\vtheta$, control $u$, learning rate $\eta$, distance $d$, weight $\lambda$
\STATE $\vtheta^{\prime} \leftarrow \vtheta$
\REPEAT
\STATE Sample $\rvx_0 \sim \pprior$ and $n \sim \mathcal{U}\{1, N-1\}$
\STATE Integrate Eq.~(\ref{eq:pf-ode-forward}) to obtain $\hat{\rvx}_{t_n}$ and $\hat{\rvx}_{t_{n+1}}$
\STATE $\mathcal{L}(\vtheta,\vtheta^{\prime}; u) \leftarrow \lambda(t_n)d\bigl(f_{\vtheta^{\prime}}(\hat{\rvx}_{t_{n+1}}, t_{n+1}),f_{\vtheta}(\hat{\rvx}_{t_n}, t_n)\bigr)$
\STATE $\vtheta \leftarrow \vtheta - \eta\, \nabla\vtheta \mathcal{L}(\vtheta,\vtheta^{\prime}; u)$
\STATE $\vtheta^{\prime} \leftarrow \text{stopgrad}(\vtheta)$
\UNTIL{convergence}
\end{algorithmic}
\end{algorithm}

\begin{figure}[bt!]
\centering
\includegraphics[width=0.9\linewidth]{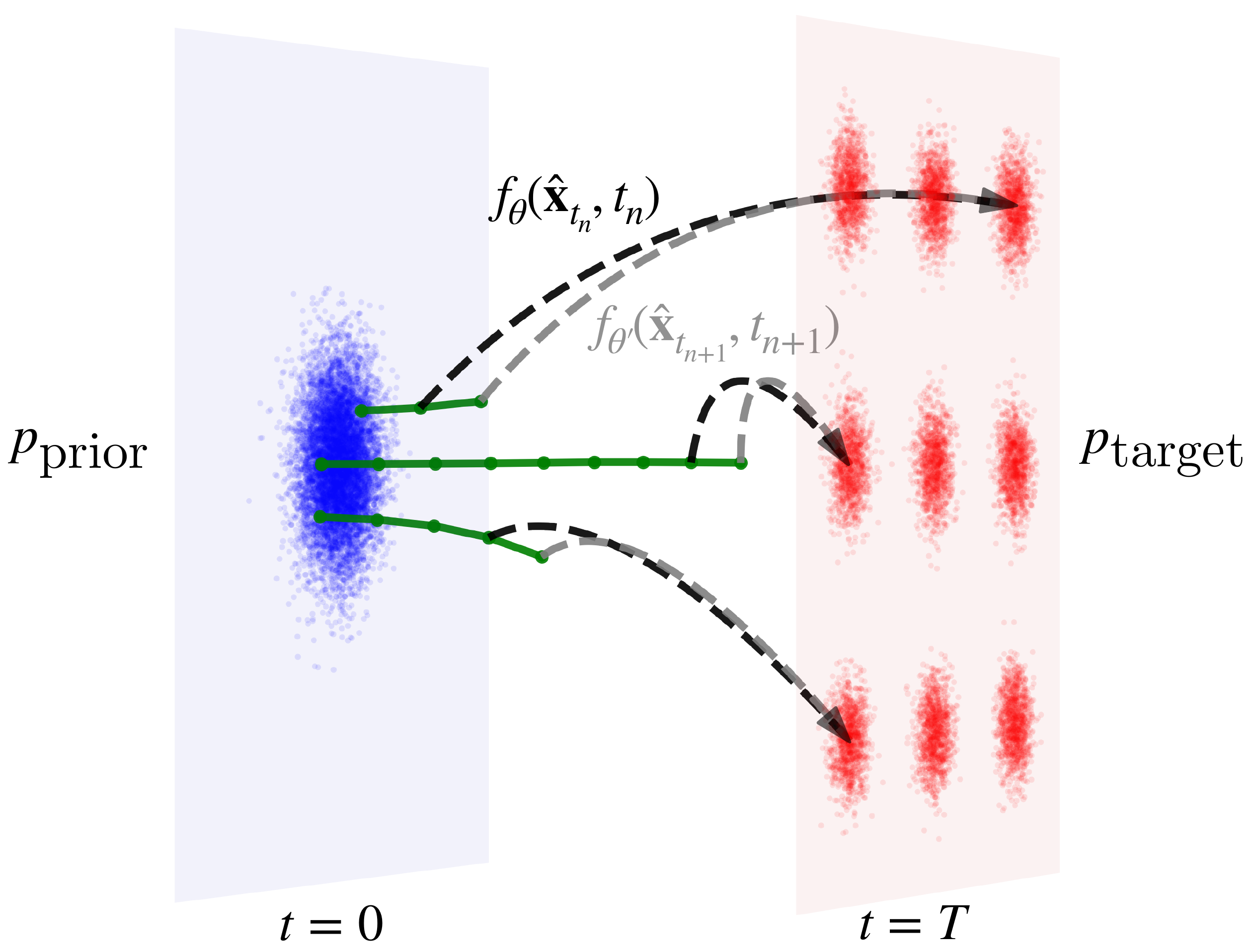}
\caption{Consistency distilled diffusion samplers learn to map consecutive intermediate states (black and gray dots) along partial ODE trajectories (green curve) directly to the terminal state.}
\label{fig:cd}
\end{figure}

If the loss in Eq.~\ref{eq:cdloss} is driven to zero, the learned consistency function can approximate the true mapping arbitrarily well, provided the step size of the ODE solver is sufficiently small. 
We formally state this in Theorem \ref{thm:cm-bound}.
\begin{theorem}\label{thm:cm-bound}
    Let $\vf_\vtheta(\rvx_t, t)$ be a consistency function parameterized by $\vtheta$, and let $\vf(\rvx_t, t; u)$ denote the consistency function of the PF ODE defined by the control $u$. 
    Assume that $\vf_\vtheta$ satisfies a Lipschitz condition with constant $L > 0$, such that for all $t \in [0, T]$ and for all $\rvx_t, \rvy_t$,
    $$
        \| \vf_\vtheta(\rvx_t, t) - \vf_\vtheta(\rvy_t, t)\|_2 \leq L \|\rvx_t - \rvy_t\|_2.
    $$
    Additionally, assume that for each step $n \in \{1, 2, \ldots, N-1\}$, the ODE solver called at $t_{n}$ has a local error bounded by $O((t_{n+1} - t_n)^{p+1})$ for some $p \geq 1$.
    
    If, additionally, $\mathcal{L}_\text{CD}(\vtheta, \vtheta; u) = 0$, then:
    $$
    \sup_{n,\rvx_{t_n}} \|\vf_\vtheta(\rvx_{t_n}, t_n) -  \vf(\rvx_{t_n}, t_n; u)\|_2 = O((\Delta t)^p),
    $$
    where $\Delta t := \max_{n \in \{1, 2, \ldots, N-1\}} |t_{n+1} - t_n|$. 
\end{theorem}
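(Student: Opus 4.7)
The plan is to adapt the standard backward-induction argument used for consistency models to the present forward-in-time sampling setting, where the target time is $T$ rather than $0$. Concretely, I would bound $e_n := \sup_{\rvx_{t_n}} \|\vf_\vtheta(\rvx_{t_n}, t_n) - \vf(\rvx_{t_n}, t_n; u)\|_2$ by a recursion in $n$ and then sum the per-step errors along the grid.

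The base case is $n = N$, where the boundary condition $\vf_\vtheta(\rvx_T, T) = \rvx_T = \vf(\rvx_T, T; u)$ built into the consistency parameterization gives $e_N = 0$. For the inductive step, fix an arbitrary $\rvx_{t_n}$, let $\vx^\star_{t_{n+1}}$ denote the exact PF ODE flow at time $t_{n+1}$ starting from $(\rvx_{t_n}, t_n)$, and let $\hat{\vx}_{t_{n+1}}$ denote the solver's one-step output from the same initial condition. By definition of the true consistency function along the ODE, $\vf(\rvx_{t_n}, t_n; u) = \vf(\vx^\star_{t_{n+1}}, t_{n+1}; u)$, while the hypothesis $\mathcal{L}_\text{CD}(\vtheta, \vtheta; u) = 0$ yields $\vf_\vtheta(\rvx_{t_n}, t_n) = \vf_\vtheta(\hat{\vx}_{t_{n+1}}, t_{n+1})$. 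Inserting $\pm\vf_\vtheta(\vx^\star_{t_{n+1}}, t_{n+1})$ and combining the Lipschitz assumption with the local truncation bound $\|\hat{\vx}_{t_{n+1}} - \vx^\star_{t_{n+1}}\|_2 = O((\Delta t)^{p+1})$ produces the recursion $e_n \leq e_{n+1} + L\cdot O((\Delta t)^{p+1})$. Unrolling from $n = N$ gives $e_n \leq (N-n)\,L\cdot O((\Delta t)^{p+1})$, and since $N = O(1/\Delta t)$, this collapses to $\sup_n e_n = O((\Delta t)^p)$, as required.

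The main obstacle is the pointwise interpretation of $\mathcal{L}_\text{CD} = 0$: the loss is only an expectation over $\rvx_0 \sim \pprior$ and $n \sim \mathcal{U}\{1, N-1\}$, so its vanishing only yields the identity $\vf_\vtheta(\hat{\rvx}_{t_{n+1}}, t_{n+1}) = \vf_\vtheta(\hat{\rvx}_{t_n}, t_n)$ on the support of the solver's trajectories at each $t_n$. Under the standard assumption that $\vf_\vtheta$ is continuous and that the forward PF ODE carries the Gaussian prior to a distribution with full support in $\R^d$, the equality extends to every $\rvx_{t_n}$, so the supremum in the conclusion is well-defined. Everything else is a routine triangle-inequality telescoping that uses only the hypothesized Lipschitz and order-$p$ solver bounds, so no new estimates are needed beyond those explicitly listed in the statement.
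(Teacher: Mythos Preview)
Your argument is correct and is the standard consistency-model backward induction: one local truncation error per step, summed over $N=O(1/\Delta t)$ steps. The paper's proof reaches the same bound but organizes the bookkeeping differently. Because in this setting \emph{both} points fed to the loss, $\hat{\rvx}_{t_n}$ and $\hat{\rvx}_{t_{n+1}}$, are solver iterates from $\rvx_0$ (rather than one of them being exact, as in the original Song et~al.\ setting), the paper fixes an exact trajectory $(\rvx_{t_n})_n$ and the coupled solver trajectory $(\hat{\rvx}_{t_n})_n$ from the same $\rvx_0$, writes $\ve_n = [\vf_\vtheta(\rvx_{t_n},t_n)-\vf_\vtheta(\hat{\rvx}_{t_n},t_n)] + [\vf_\vtheta(\hat{\rvx}_{t_{n+1}},t_{n+1})-\vf_\vtheta(\rvx_{t_{n+1}},t_{n+1})] + \ve_{n+1}$, and observes that the middle terms telescope away, leaving only two \emph{global}-error terms $L\|\rvx_{t_n}-\hat{\rvx}_{t_n}\|$ and $L\|\rvx_T-\hat{\rvx}_T\|$, each already $O((\Delta t)^p)$. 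Your route instead treats the fixed $\rvx_{t_n}$ as the solver's input, so the zero-loss identity must be invoked at arbitrary points; you correctly flag this and handle it with the continuity/full-support argument. The paper avoids that extension since it only applies the identity on actual solver pairs, but in exchange it must separately quote the global-error bound for the ODE solver. Both paths are equally short and yield the same $O((\Delta t)^p)$.
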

A complete proof is provided in Appendix \ref{sec:a-proof}.

While our distillation approach builds upon the core principles of consistency models, it differs in setting and requirements.
Consistency generative models assume direct access to real samples from the target distribution. 
In contrast, our consistency distilled diffusion samplers address the problem of sampling from unnormalized target densities, where no dataset of target samples is available. 
Our method extends consistency distillation to sampling from unnormalized distributions, making it applicable beyond generative modeling tasks.

\section{Self-Consistent Diffusion Samplers}\label{sec:SCDS}
In this section, we introduce \emph{self-consistent diffusion sampler} (SCDS) that achieves single-step sampling without requiring a pre-trained diffusion sampler. 
Our motivation stems from merging two complementary perspectives.

First, diffusion-based samplers learn a time-dependent control function that steers an SDE from a simple prior distribution to the target distribution. 
Typically, the control is trained on a fixed schedule (e.g., $N$ small increments of length $T/N$ along a discretized time axis), requiring multiple steps.
Second, consistency models learn a direct mapping from any intermediate state on an ODE to the terminal state. 
In other words, at time $t$ the model is implicitly taught to jump a large step of length $T - t$.

Our idea is to unify these approaches in a single model. 
Specifically, we condition a control function $u_\theta(\rvx_t, t, d)$ on both the current time $t$ and the desired step size $d$. 
By adjusting $d$, the model can adapt between short incremental steps (as in standard diffusion samplers) and large jumps (as in consistency models). 
This design amortizes the learning of both small and large transitions into one network and recovers consistency models' single-step sampling by setting $d = T - t$ and diffusion sampling by setting $d = T/N$.
In doing so, we avoid training two separate models. 

\paragraph{Enforcing Self-Consistency}
To ensure that the step-size-conditioned control function $u_\theta(\rvx_t, t, d)$ remains accurate across varying step sizes, we introduce a self-consistency loss. 
The key idea is that taking a large step should yield the same result as taking multiple smaller steps.
To do so, we impose a consistency condition on the Euler discretization of the PF ODE in Eq.~\ref{eq:pf-ode-forward}.
Specifically, a single large step of size $2d$,
\begin{equation}\label{eq:shortcut}
\rvx_{t+2d}= \rvx_t + \left( \mu(t)\rvx_t + \tfrac12 g(t)u_\theta(\rvx_t, t, 2d)\right) 2d,
\end{equation}
must equal two smaller steps of size $d$.
The intermediate state is computed as
\begin{equation*}\label{eq:intermediate}
    \rvx_{t+d}^{\prime}  = \rvx_t + \left( \mu(t)\rvx_t + \tfrac12 g(t) u_{\vtheta'}(\rvx_t, t, d) \right) d
\end{equation*}
and the final state after two steps is
\begin{equation}\label{eq:twostep}
\begin{aligned}
    &\rvx_{t+2d}^{\prime}  = \rvx_{t+d}^{\prime}  \\
    &+ \left( \mu(t+d)\rvx_{t+d} + \tfrac12 g(t+d) u_{\vtheta'}(\rvx_{t+d}, t+d, d) \right) d,
\end{aligned}
\end{equation}
where $\vtheta^{\prime} = \operatorname{stopgrad}(\vtheta)$. 
The self-consistency objective is a simple least square minimization problem:
\begin{equation}\label{eq:sc-loss}
    \mathcal L_{\text{SC}} = \mathbb E\left[\left\lVert \rvx_{t+2d}^{\prime} - \rvx_{t+2d} \right\rVert^2\right]
\end{equation}
where the expectation is taken over time indices and step sizes drawn from the simulated trajectories.

This loss encourages the model to correct for numerical errors when taking large steps, allowing it to “skip” multiple smaller steps while remaining consistent with the dynamics of the PF ODE.
To initiate this recursive training, we must define and learn the behavior at the base case $d=T/N$.
\begin{figure*}[bt!]
\centering
\includegraphics[width=1.0\linewidth]{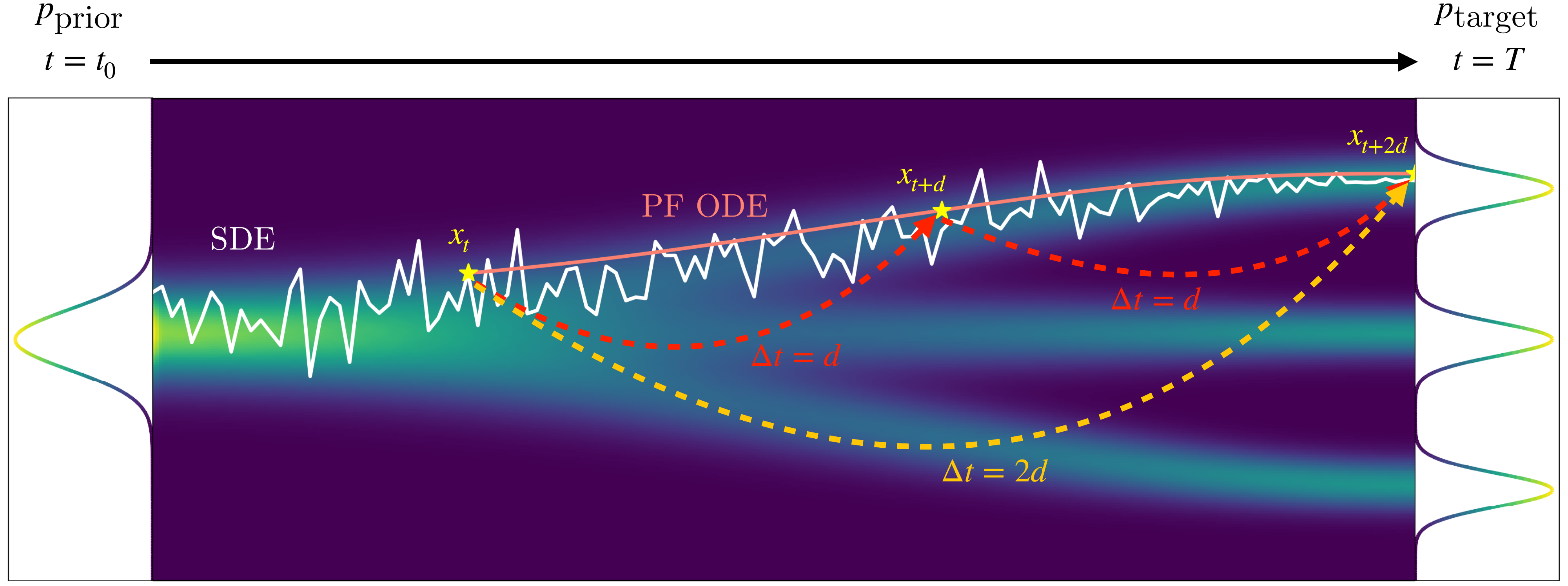}
\caption{Graphical illustration of the training procedure for SCDS over the path space.
First, the SDE trajectory (white) is simulated to compute the sampling loss $\mathcal L_{S}$.
Next, a timestep $t$ and a step size $d$ are randomly sampled.
From $\rvx_t$ on the simulated SDE trajectory, we execute two consecutive steps of size $d$ (red) along the PF-ODE trajectory (pink), obtaining the target $\rvx_{t+2d}^{\prime}$.
Finally, the shortcut step of size $d$ (orange) predicts $\rvx_{t+2d}$ directly from $\rvx_t$, and the self-consistency loss $\mathcal L_{SC}$ minimizes the squared difference between $\rvx_{t+2d}$ and the two-step target $\rvx_{t+2d}^{\prime}$, ensuring multi-scale consistency.}
\label{fig:scds}
\end{figure*}

\paragraph{Learning the Base Case $\mathbf{d=T/N}$}
In standard generative modeling scenarios (where a dataset is available), the base case $d=T/N$ can be learned directly from data using deterministic trajectories \citep{lipman2023flowmatching, frans2025shortcut}. 
These trajectories provide explicit guidance toward high-density regions of the target distribution.

However, when working with an unnormalized density, the key challenge is discovering high-probability regions~(modes). 
In such cases, exploration is necessary to locate and model these regions effectively \citep{chen2025SCLD}. 
Diffusion-based samplers facilitate exploration through their stochastic dynamics: Brownian motion helps probe different parts of the space, allowing the model to learn and adapt itself to the target distribution.

Thus, diffusion-based sampling is particularly well-suited for learning the base case. 
The sampling objectives in Eq.~\ref{eq:kl} and Eq.~\ref{eq:lv} train the model by simulating the stochastic process in Eq.~\ref{eq:generative-sde}, allowing it to learn the structure of high-density regions. 
In this work, we adopt the log-variance divergence as our base sampling objective:
\begin{equation}\label{eq:sampling-loss}
    \mathcal{L}_\text{S} = D_\text{LV}(\mathbb{P}_{\rvx} \,\|\, \mathbb{P}_{\rvy}).
\end{equation}
By optimizing $u_\theta(\rvx_t, t, d=T/N)$ under this loss, we ensure that the model can generate meaningful transitions from the prior to these regions of interest, forming a strong foundation for self-consistent learning at larger step sizes.

\paragraph{End-to-End Training Algorithm}
Our training procedure jointly optimizes two objectives: (1) the sampling loss Eq.~\ref{eq:sampling-loss} for the base case  $d=T/N$, which ensures exploration and score approximation by simulating the SDE in Eq.~\ref{eq:generative-sde}, and (2) the self-consistency loss in Eq.~\ref{eq:sc-loss} enforced on the PF-ODE in Eq.~\ref{eq:pf-ode-forward} for larger $d$, which enforces consistency across multiple time scales.

To enable the recursive halving of steps, we discretize the time interval $[0,T]$ into $N+1$ points, where $N$ is chosen as a power of two.
The sampling loss is computed by simulating the forward SDE along this time grid.

For self-consistency training, we sample step sizes $d$ and times $t$ such that $d$ are powers of two (multiplied by $T/N$) dividing the remaining time $T - t$.
This ensures that from any time $t$, we can take exactly $k$ steps of size $d$ to reach the terminal state for some integer $k$. 
This way, training focuses on time sequences that are applicable during inference.

To compute the self-consistency loss, we extract $\rvx_t$ from the simulated forward SDE. 
Using $\rvx_t$ and the sampled step size $d$, we compute the shortcut step $\rvx_{t+2d}$ using Eq. \ref{eq:shortcut} and the two-step target trajectory $\rvx_{t+2d}^{\prime}$ using Eq. \ref{eq:twostep} on the PF ODE. 
We then optimize their squared difference via Eq.~\ref{eq:sc-loss}, ensuring that larger steps remain consistent with fine-grained trajectories.
The training procedure is summarized in Algorithm~\ref{alg:training-summary} and illustrated in Figure \ref{fig:scds}. 
\begin{algorithm}[tb]
\caption{SCDS Training}\label{alg:training-summary}
\begin{algorithmic}
\STATE \textbf{Input} Model parameters $\vtheta$, loss weightings $\lambda_\text{S}(\cdot)$ and $\lambda_\text{SC}(\cdot)$
\STATE $\vtheta' \leftarrow \vtheta$
\REPEAT
    \STATE Sample $\rvx_0 \sim \pprior$ and $(d, t) \sim p_{d,t}$.
    \STATE Compute $\rvx \leftarrow (\rvx_i)_{i=0}^T$ by simulating Eq.~\ref{eq:generative-sde} 
    \STATE Compute $\rvx_{t+2d}^{\prime}$ from Eq.~\ref{eq:shortcut}
    \STATE Compute $\rvx_{t+2d}$ from Eq.~\ref{eq:twostep}    
    \STATE Compute $\mathcal{L}_{\text{S}}$ using Eq.~\ref{eq:sampling-loss}
    \STATE Compute $\mathcal L_{\text{SC}}$ using Eq.~\ref{eq:sc-loss}.
    \STATE $\vtheta \leftarrow \nabla_\vtheta \left(\lambda_\text{S}(t)\mathcal{L}_{\text{S}} + \lambda_\text{SC}(t)\mathcal L_{\text{SC}}\right)$
    \STATE $\vtheta' \leftarrow \operatorname{stopgrad}{\vtheta}$
\UNTIL{convergence}
\end{algorithmic}
\end{algorithm}

Compared to previous diffusion-based samplers, our method only incurs $3$ additional network function evaluations per training iteration.

\paragraph{Few-step Sampling}
With a well-trained control $u_\theta$, sampling can be performed in a single step by drawing from the prior and applying a single Euler update with step size $d=T$, as shown in Algorithm~\ref{alg:single-step}. 
This accelerates generation compared to traditional diffusion-based samplers. 
Alternatively, our method provides a flexible tradeoff between computational efficiency and sample quality, allowing for multi-step refinement when needed, thus recovering standard diffusion-based sampling. 
This iterative procedure is detailed in Algorithm~\ref{alg:multi-step}.
\begin{algorithm}[tb]
\caption{Single-Step Sampling with SCDS}
\label{alg:single-step}
\begin{algorithmic}
\STATE \textbf{Input:} Trained model $u_\theta$
\STATE Sample $\rvx_0 \sim \pprior$
\STATE Compute $\rvx_{T} = \rvx_0 + \left( \mu(0) \rvx_0 + \tfrac12 g(0) u_\theta(\rvx_0, 0, T) \right) T$
\STATE \textbf{Return} $\rvx_T$
\end{algorithmic}
\end{algorithm}
\begin{algorithm}[tb]
\caption{Multi-Step Sampling with SCDS}
\label{alg:multi-step}
\begin{algorithmic}
\STATE \textbf{Input:} Trained model $u_\theta$,  number of sampling steps $K$
\STATE Sample $\rvx_0 \sim \pprior$
\STATE Initialize $d \gets T/K$ and $t \gets 0$
\FOR{$k = 1, \dots, K$}
    \STATE Compute
        $\rvx_{t+d} = \rvx_t + \left( \mu(t) \rvx_t + \tfrac12 g(t) u_\theta(\rvx_t, t, d) \right) d$
    \STATE Update $t \gets t + d$
\ENDFOR
\STATE \textbf{Return} $\rvx_T$
\end{algorithmic}
\end{algorithm}

\paragraph{Approximating $\mathbf{Z}$.}
A benefit of SCDS is the ability to estimate the intractable normalizing constant $Z$. 
By leveraging the relationship established in the KL divergence objective (Eq.~\ref{eq:kl}), we can approximate $\log Z$. 
Specifically, when the optimal control $u^* = g(t)\nabla\log p_{\rvy_t}(\rvx_t)$ is attained, the KL divergence $D_\text{KL}(\mathbb{P}_{\rvx} \,\|\, \mathbb{P}_{\rvy})$ reaches zero.
This implies
$$
	- \log Z = \min_{u \in \mathcal{U}} \mathbb E \bigl[R(\rvx) + B(\rvx)\bigr].
$$
Unlike CDDS and consistency models, which focus on solely sample generation, SCDS leverages the control-based formulation to handle both sampling and the normalizing constant, making it applicable to a broader range of probabilistic tasks.

\paragraph{Learning Shortcuts Without Data}
SCDS shares conceptual similarities with progressive distillation \citep{salimans2022progdist} and shortcut models \citep{frans2025shortcut}, both of which enforce that a large time step transition should be consistent with two half-sized transitions. 
However, these methods rely on access to a dataset or to a pre-trained teacher model. 
In contrast, SCDS operates entirely without data, learning both the diffusion process and shortcut connections directly from an unnormalized density. 
This independence from a pre-trained model grants SCDS greater flexibility in choosing the prior distribution, SDE formulation, and time discretization, without being constrained by the design choices of a teacher model.

\begin{table*}[t]
    \centering
    \caption{Comparison of different methods in terms of Sinkhorn distances (lower is better). We present results on tasks where ground-truth samples are available for evaluation. ``NFE'' refers to the number of function evaluations.}
    \begin{tabular}{ll|ccccc}
        \toprule
        \multicolumn{2}{c|}{\textbf{Sinkhorn ($\downarrow$)}} & 
        \multicolumn{5}{c}{\textbf{Target Distribution}} \\ 
        \textit{Sampler} & \textit{NFE} & \textit{GMM (2d) }& \textit{Image (2d) }& \textit{Funnel (10d)} & \textit{MW54 (5d)} & \textit{MW52 (50d)} \\ 
        \midrule 
        \multirow{3}{*}{\makecell{SCDS \\ \textit{(Ours) }}}& 128 & 0.0204 & 0.0169 & 5.2569 & 0.1191 & 7.4557 \\
        & 2 & 0.0279 & 0.0294 & 5.3488 & 0.1955 & 11.5200 \\ 
           & 1 & 0.0330 & 0.0322 & 5.3729 & 0.2102 & 7.4925\\ \midrule
        \multirow{2}{*}{\makecell{CDDS \\ \textit{(Ours)}}} & 2 & 0.0241 & 0.0309 & 7.1329 & 0.1570 & 6.5010 \\ 
            & 1 & 0.0224 & 0.0309 & 7.2159 & 0.1569 & 6.5285 \\ 
        \midrule
        PIS        & 128 & 0.6656 & 0.9168 & 5.9956 & 0.1223 & 7.2955 \\
        DDS        & 128 & 0.0709 & 1.5818 & 6.0467 & 0.1190 & 7.2842 \\ 
        DIS        & 128 & 0.0203 & 0.0170 & 5.1578 & 0.1197 & 7.3668 \\
        DIS  & 1 & 0.0551 & 0.2781 & 10.4033 & 6.4679 & 31.7883 \\ 
        \bottomrule
    \end{tabular}
    \label{tab:sinkhorn}
\end{table*}

\section{Experiments}\label{sec:experiments}
\paragraph{Experimental Setup.}
We evaluate our CDDS and SCDS on multiple sampling benchmarks: a 9-mode Gaussian mixture model in 2d (GMM), a 2d image of a labrador (Image), a 10d Funnel distribution, and two 32-mode many-well tasks (MW54 in 5d and MW52 in 50d). 
We also consider a high-dimensional log Gaussian Cox Process (LGCP) problem in 1600d.

We compare to three seminal diffusion samplers: path integral sampler (PIS) \citep{zhang2022pis}, denoising diffusion sampler (DDS) \citep{vargas2023dds}, and time-reversed diffusion sampler (DIS) \citep{berner2024dis}. 
We also show a single-step version of DIS as a naive baseline, primarily to gauge how single-step sampling might upper-bound the Sinkhorn distance if we remove any learned shortcut. 
In our experiments, CDDS is a distilled version of DIS, and is initialized from DIS weights. 
Similarily, the sampling loss in SCDS is computed as in DIS. 
We use Fourier features network to condition on the stepsize $d$ \citep{tancik2020fourier}.

When ground-truth samples are available, we measure performance via the Sinkhorn distance \citep{cuturi2013sinkhorn} between generated samples and samples from the target distribution. 
For the LGCP task, we report the relative error of the estimated normalizing constant $\log Z$. 
Additionally, we quantify the number of function evaluations (NFE) \citep{karras2022edm}, which corresponds to the total SDE/ODE discretization steps required for each sampler. 
For more details on the training of the various samplers, along with evaluation details and target distribution settings, see Appendix~\ref{sec:details}. 

\begin{figure}[tb!]
\centering
\includegraphics[width=1.0\linewidth]{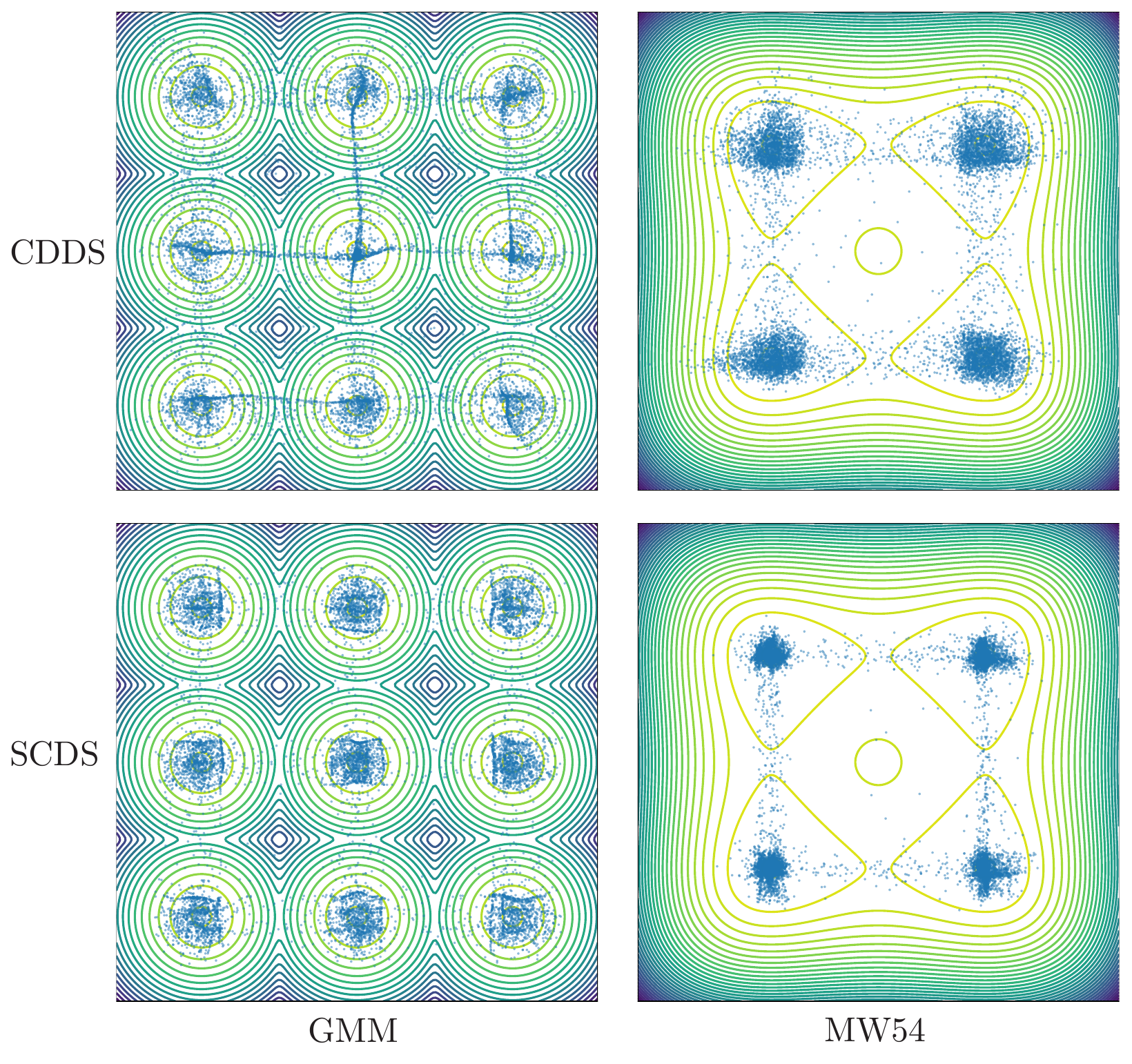}
\caption{Visualization of the GMM and MW54 tasks. 
CDDS and SCDS recover all modes in just a single sampling step.}
\label{fig:contour}
\end{figure}

\paragraph{Sinkhorn Results and Analysis.}
Table~\ref{tab:sinkhorn} shows that both CDDS and SCDS maintain competitive sinkhorn distances in single- and two-steps generations compared to existing diffusion-based samplers with $128$ steps. 
A single-step version of DIS is also listed in Table~\ref{tab:sinkhorn} to illustrate a naive upper bound on the distance. 
As expected, skipping all intermediate steps hurts sampling quality significantly. 
However,  even with only one step, SCDS and CDDS consistently outperform single-step DIS by a clear margin on every task, highlighting the benefits of enforcing consistency.
Figure~\ref{fig:contour} shows that CDDS and SCDS recover all modes when sampling using a single step on the GMM and MW54 tasks.

As with other consistency-based methods \cite{song2023consistency} we find CDDS’s multi-step performance typically saturates after 2--3 steps, indicating minimal gains from iterative refinements.
In contrast, SCDS’s accuracy steadily improves with increasing step counts in most tasks (see Figure~\ref{fig:histogram}), except for minor dips at 4 steps in Funnel and at 2/4 steps in MW52. 
Such dips may arise from partial coverage challenges or local minima in training when bridging intermediate steps in relatively high dimensional data; nonetheless, the general upward trend demonstrates that SCDS effectively recovers standard multi-step diffusion behaviors. 
Moreover, SCDS often compares to or surpass PIS and DDS at 128 steps, thanks to the log-variance objective and the optimal control perspective from \citet{berner2024dis, richter2024improved}.
Interestingly, on the 50-dimensional MW52 task, CDDS attains a lower Sinkhorn distance than all baselines. 
We hypothesize that distillation, by leveraging the PF ODE of a well-trained DIS, learns smoother transitions that are especially beneficial in high-dimensional settings.
\begin{figure*}[tb!]
\centering
\includegraphics[width=1.0\linewidth]{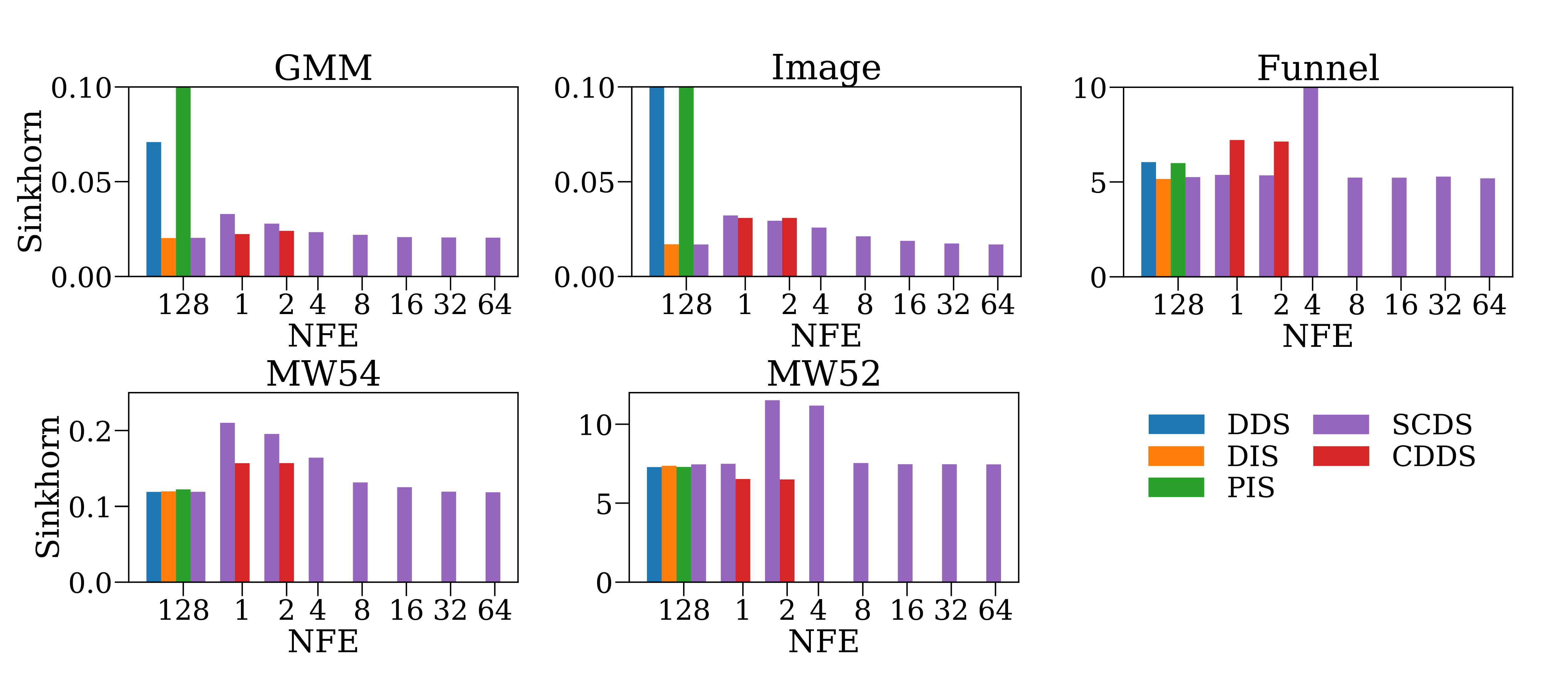}
\caption{Comparison of Sinkhorn distance for a range of NFEs between the proposed consistency samplers (CDDS, SCDS) and diffusion-based samplers (PIS, DDS, DIS). 
For most targets, CDDS and SCDS show competitive Sinkhorn values with baselines with much lower NFEs.}
\label{fig:histogram}
\end{figure*}

\paragraph{Log Gaussian Cox Process.}
\begin{table}[t]
    \centering
    \caption{Relative error of Log $Z$ estimates for various samplers on LGCP target distribution.}
    \begin{tabular}{ll|c}
        \toprule
        \multicolumn{3}{c}{\textbf{LGCP (1600d)}}\\ 
        \textit{Sampler}  & \textit{NFE} & \textit{Log} $Z$ \textit{Error} ($\downarrow$)\\ 
        \midrule
        \multirow{8}{*}{\makecell{SCDS \\
        \textit{(Ours)}}} & 128 & 0.9968 \\ 
         & 64 & 1.0506  \\ 
        & 32 & 1.5976  \\ 
         & 16 & 2.2378  \\ 
         & 8 & 2.7931  \\ 
         & 4 & 3.9660 \\ 
          & 2 & 6.2420  \\ 
          & 1 & 9.9877 \\  \midrule
        PIS        & 128 & 0.2910 \\
        DDS        & 128 & 2.8545  \\ 
        \midrule
        \multirow{2}{*}{DIS}        & 128 & 0.3736  \\ 
          & 1 & 3094.7296  \\
        \bottomrule
    \end{tabular}
    \label{tab:logz}
    \vspace{-1em}
\end{table}
Table~\ref{tab:logz} compares $\log Z$ estimation errors for each method on the 1600d LGCP task. 
Multi-step PIS and DIS achieve smaller errors then SCDS, but SCDS remains viable even at reduced NFEs. 
Notably, as expected, single-step DIS fails catastrophically, whereas single-step SCDS remains stable. 

Since SCDS learns a time-dependent control function, it retains a connection to the Radon-Nikodym derivative in Eq.~\ref{eq:RN}, allowing for partition function estimation.
In contrast, CDDS (and consistency models in general) lack an explicit control representation, meaning they cannot directly estimate $Z$.
This is a key advantage of SCDS in applications where unnormalized densities must be integrated, such as Bayesian inference.

\vspace{-1em}
\paragraph{Discussion.}
Our methods target scenarios where reducing sampling complexity is critical. 
A key advantage of SCDS lies in its ability to learn both the diffusion sampling process and the self-consistency shortcuts \emph{simultaneously}. 
In contrast to consistency models, which require a pre-trained sampler or high-fidelity trajectories for distillation, SCDS forgoes such prerequisites and instead enforces consistency during training. 
This design choice is supported by our empirical results showing that SCDS is often competitive with well-established diffusion samplers and consistency-distilled approach CDDS that benefit from a carefully tuned, pre-trained teacher. 
Moreover, SCDS adapts seamlessly from single-step to many-step sampling without retraining, making it ideal for real-world applications with varying computational budgets or latency constraints.

\section{Conclusion}\label{sec:conclusion}
We introduced two novel approaches for efficient sampling from unnormalized target distributions: \emph{consistency-distilled diffusion samplers} (CDDS) and the \emph{self-consistent diffusion sampler} (SCDS). 
CDDS uses consistency distillation without generating a large dataset of samples. 
SCDS requires no pre-trained samplers and 
simultaneously learns to sample high-density regions and to take large steps across the path space. 
Our empirical results across a range of benchmarks demonstrate that both methods achieve competitive accuracy with as few as one or two steps. 
These findings highlight the potential of consistency-based methods for sampling from unnormalized densities. 

\bibliography{references}
\bibliographystyle{icml2025}

\newpage
\appendix
\onecolumn
\section{Consistency Distillation Proof}\label{sec:a-proof}
\renewcommand{\thetheorem}{4.\arabic{theorem}} 
\begin{theorem}
    Let $\vf_\vtheta(\rvx_t, t)$ be a consistency function parameterized by $\vtheta$, and let $\vf(\rvx_t, t; u)$ denote the consistency function of the PF ODE defined by the control $u$. 
    Assume that $\vf_\vtheta$ satisfies a Lipschitz condition with constant $L > 0$, such that for all $t \in [0, T]$ and for all $\rvx_t, \rvy_t$,
    $$
        \| \vf_\vtheta(\rvx_t, t) - \vf_\vtheta(\rvy_t, t)\|_2 \leq L \|\rvx_t - \rvy_t\|_2.
    $$
    Additionally, assume that for each step $n \in \{1, 2, \ldots, N-1\}$, the ODE solver called at $t_{n}$ has a local error bounded by $O((t_{n+1} - t_n)^{p+1})$ for some $p \geq 1$.
    
    If, additionally, $\mathcal{L}_\text{CD}(\vtheta, \vtheta; u) = 0$, then:
    $$
    \sup_{n,\rvx_{t_n}} \|\vf_\vtheta(\rvx_{t_n}, t_n) -  \vf(\rvx_{t_n}, t_n; u)\|_2 = O((\Delta t)^p),
    $$
    where $\Delta t := \max_{n \in \{1, 2, \ldots, N-1\}} |t_{n+1} - t_n|$.
\end{theorem}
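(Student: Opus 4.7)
The plan is to adapt the standard consistency models induction argument (Song et al., 2023, Theorem~1) to our control-based PF ODE setting. Fix an arbitrary state $\rvx_{t_n}$ at time $t_n$ and define the pointwise error $\ve_n(\rvx_{t_n}) := \vf_\vtheta(\rvx_{t_n}, t_n) - \vf(\rvx_{t_n}, t_n; u)$. I would prove, by backward induction on $n$ from $N$ to $1$, that $\sup_{\rvx_{t_n}} \|\ve_n(\rvx_{t_n})\|_2 \le (N-n)\,L\,C\,(\Delta t)^{p+1}$ for some constant $C$ depending only on the solver; since $N-n \le N \le T/\Delta t$, this immediately yields the claimed $O((\Delta t)^p)$ bound.

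The base case is the boundary condition at $t_N = T$. Any standard consistency parameterization enforces $\vf_\vtheta(\rvx_T, T) = \rvx_T = \vf(\rvx_T, T; u)$, so $\ve_N \equiv 0$. For the induction step, assume the bound at $n+1$ and consider $\rvx_{t_n}$. Let $\rvx_{t_{n+1}}^{*}$ denote the exact state obtained by integrating the PF ODE in Eq.~\ref{eq:pf-ode-forward} from $(\rvx_{t_n}, t_n)$ to time $t_{n+1}$, and let $\hat{\rvx}_{t_{n+1}}$ be the one-step approximation produced by the order-$p$ solver. The solver assumption gives $\|\hat{\rvx}_{t_{n+1}} - \rvx_{t_{n+1}}^{*}\|_2 \le C (t_{n+1}-t_n)^{p+1} \le C(\Delta t)^{p+1}$. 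By the defining semigroup property of the exact consistency function, $\vf(\rvx_{t_n}, t_n; u) = \vf(\rvx_{t_{n+1}}^{*}, t_{n+1}; u)$. The hypothesis $\mathcal{L}_{\mathrm{CD}}(\vtheta,\vtheta;u) = 0$ with $d$ a proper distance and $\lambda(t_n) > 0$ forces $\vf_\vtheta(\rvx_{t_n}, t_n) = \vf_\vtheta(\hat{\rvx}_{t_{n+1}}, t_{n+1})$ for all sampled trajectories starting from $\rvx_{t_n}$.

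With these identities I would split the error as
\begin{equation*}
\ve_n(\rvx_{t_n}) = \underbrace{\bigl[\vf_\vtheta(\hat{\rvx}_{t_{n+1}}, t_{n+1}) - \vf_\vtheta(\rvx_{t_{n+1}}^{*}, t_{n+1})\bigr]}_{\text{solver discrepancy}} + \underbrace{\bigl[\vf_\vtheta(\rvx_{t_{n+1}}^{*}, t_{n+1}) - \vf(\rvx_{t_{n+1}}^{*}, t_{n+1}; u)\bigr]}_{=\,\ve_{n+1}(\rvx_{t_{n+1}}^{*})}.
\end{equation*}
The Lipschitz assumption on $\vf_\vtheta$ bounds the first bracket by $L\,C\,(\Delta t)^{p+1}$, and the induction hypothesis bounds the second by $(N-n-1)\,L\,C\,(\Delta t)^{p+1}$ (uniformly in $\rvx_{t_{n+1}}^{*}$). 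Adding these and taking suprema closes the induction.

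The main obstacle I expect is not the algebra of the telescoping argument, which mirrors the standard consistency proof, but rather the careful handling of the boundary condition and the passage from $\mathcal{L}_{\mathrm{CD}} = 0$ (an expectation) to a pointwise identity $\vf_\vtheta(\rvx_{t_n}, t_n) = \vf_\vtheta(\hat{\rvx}_{t_{n+1}}, t_{n+1})$. This requires invoking the support of the distribution over $(n, \rvx_0)$ underlying the expectation in Eq.~\ref{eq:cdloss} and the determinism of the PF ODE solver to ensure that varying $\rvx_{t_n}$ continuously is covered. Everything else reduces to the clean decomposition above and the telescoping sum $\sum_{k=n}^{N-1} L C (\Delta t)^{p+1} = O((\Delta t)^p)$.
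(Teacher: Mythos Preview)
Your argument is correct and reaches the same $O((\Delta t)^p)$ conclusion, but it follows a genuinely different decomposition from the paper's. You run the original Song et~al.\ backward induction: from an arbitrary $\rvx_{t_n}$ you take one solver step, incur a \emph{local} error $O((\Delta t)^{p+1})$, push it through the Lipschitz bound, and accumulate $(N-n)$ such terms to get $O((\Delta t)^p)$. The paper instead fixes an initial $\rvx_0$, tracks both the exact PF-ODE trajectory $(\rvx_{t_k})$ and the numerically integrated trajectory $(\hat\rvx_{t_k})$ from that same $\rvx_0$, and shows that the recursion $\ve_n = \vf_\vtheta(\rvx_{t_n},t_n)-\vf_\vtheta(\hat\rvx_{t_n},t_n) + \vf_\vtheta(\hat\rvx_{t_{n+1}},t_{n+1})-\vf_\vtheta(\rvx_{t_{n+1}},t_{n+1}) + \ve_{n+1}$ telescopes all intermediate terms away, leaving only $L\|\rvx_{t_n}-\hat\rvx_{t_n}\|_2 + L\|\rvx_T-\hat\rvx_T\|_2$, i.e.\ two \emph{global} integration errors that are each $O((\Delta t)^p)$. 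The paper explicitly flags this global-error handling as the distinguishing feature of the data-free setting, since here both endpoints $\hat\rvx_{t_n},\hat\rvx_{t_{n+1}}$ of every training pair are solver-integrated from $\rvx_0$ rather than one of them being an exact sample. Your route is cleaner as an induction and stays closer to the standard consistency-model proof; the paper's route makes more visible what is specific to this setting and avoids summing $N$ local terms. One small caution: your induction hypothesis must apply at $\rvx^*_{t_{n+1}}$, the exact flow image of $\rvx_{t_n}$, which need not itself be a numerically reachable point $\hat\rvx_{t_{n+1}}$ for any $\rvx_0$; you flag the support issue, but note that the paper's parametrization (both trajectories anchored at the same $\rvx_0$) sidesteps this particular mismatch.
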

\begin{proof}
    The proof is similar to the one presented by \citet{song2023consistency}, with the key difference that we must account for the global integration error introduced by the ODE solver.

    If the ODE solver, when called at $t_{n+1}$, has a local error uniformly bounded by $O((t_{n} - t_{n-1})^{p+1})$, then the cumulative error across all steps is approximately the sum of $n+1$ local errors and is bounded by $O((\Delta t)^p)$.

    We are interested in $\ve_n$, the error between the learned consistency function and the consistency function of the PF ODE defined by the control $u$ at $\rvx_{t_n} \sim p_{t_n}(\rvx_{t_n})$,
    $$
    \ve_n:=\vf_\vtheta(\rvx_{t_n}, t_n) - \vf(\rvx_{t_n}, t_n; u). 
    $$
        
    If $\mathcal{L}(\vtheta, \vtheta; u) = 0$, we deduce that
    $$
    \lambda(t_n) d(f_{\vtheta}(\hat{\rvx}_{t_{n+1}}, t_{n+1}), f_{\vtheta}(\hat{\rvx}_{t_{n}}, t_n)) = 0.
    $$
    Since $\lambda(t_n) > 0$, this implies:
    \begin{equation}\label{eq:learned-cf}
        \vf_\vtheta(\hat \rvx_{t_{n+1}}, t_{n+1}) = \vf_\vtheta(\hat \rvx_{t_{n}}, t_{n}).
    \end{equation}

    We can derive a recurrence relation for $\ve_n$:
    \begin{align*}
        \ve_n &\overset{(i)}{=}  \vf_\vtheta(\rvx_{t_n}, t_n) - \vf_\vtheta(\hat\rvx_{t_n}, t_n) + \vf_\vtheta(\hat\rvx_{t_n}, t_n) - \vf(\rvx_{t_{n+1}}, t_{n+1}; u)\\
        &\overset{(ii)}{=} \vf_\vtheta(\rvx_{t_n}, t_n) - \vf_\vtheta(\hat\rvx_{t_n}, t_n) + \vf_\vtheta(\hat\rvx_{t_{n+1}}, t_{n+1}) - \vf(\rvx_{t_{n+1}}, t_{n+1}; u)\\
        &= \vf_\vtheta(\rvx_{t_n}, t_n) - \vf_\vtheta(\hat\rvx_{t_n}, t_n) + \vf_\vtheta(\hat\rvx_{t_{n+1}}, t_{n+1}) - \vf_\vtheta(\rvx_{t_{n+1}}, t_{n+1})\\
                &\quad\quad\quad + \vf_\vtheta(\rvx_{t_{n+1}}, t_{n+1}) - \vf(\rvx_{t_{n+1}}, t_{n+1}; u)\\
         &= \vf_\vtheta(\rvx_{t_n}, t_n) - \vf_\vtheta(\hat\rvx_{t_n}, t_n) + \vf_\vtheta(\hat\rvx_{t_{n+1}}, t_{n+1}) - \vf_\vtheta(\rvx_{t_{n+1}}, t_{n+1}) + \ve_{n+1}\\
         &\ldots\\
        &\overset{(iii)}{=} \vf_\vtheta(\rvx_{t_n}, t_n) - \vf_\vtheta(\hat\rvx_{t_n}, t_n) + \vf_\vtheta(\rvx_{T}, T) - \vf_\vtheta(\hat\rvx_{T}, T) + \ve_T.
    \end{align*}
    Here, step $(i)$ follows from the definition of the consistency function, step $(ii)$ is due to Eq. (\ref{eq:learned-cf}), and step $(iii)$ leverages the telescoping nature of the sum.

    Furthermore, since $\vf_\vtheta$ is parameterized such that $\vf_\vtheta(\rvx_{T}, T) = \rvx_{T}$, we have
    \begin{align*}
        \ve_T &= \vf_\vtheta(\rvx_{T}, T) - \vf(\rvx_{T}, T; u) \\ 
        &= \rvx_{T} - \rvx_{T}\\
        &= 0.
    \end{align*}

    Finally, given that $\vf_\vtheta$ is Lipschitz and considering the bound on the global error of the ODE solver:
    \begin{equation*}
        \|\ve_n\|_2 \leq \|\ve_T\|_2 + L\|\rvx_{t_n} - \hat\rvx_{t_n} \|_2 + L\|\rvx_{T} - \hat\rvx_{T} \|_2
        = O((\Delta t)^p).
    \end{equation*}
\end{proof}

\section{Experimental Details}\label{sec:details}
\subsection{Target Distributions}
\paragraph{GMM.}
Here we discuss the parameterization for the Gaussian mixture model with well separated modes. 
We follow the same setting as \citet{zhang2022pis, berner2024dis}, defining the target distribution as follows: 
\begin{equation*}
    \rho(\rvx) = \sum_{m=1}^{M} \alpha_m \mathcal{N}(\rvx; \mu_m, \Sigma_m)
\end{equation*}
Following their prarameterization, we set $M=9$, $\sigma_m = .3 I$, and $(\mu)_{m=1}^M = \{-5, -, 5\} \times \{-5, 0, 5\} \subset \mathbb{R}^2$.

\paragraph{Image.}
We use a normalized grayscale image to create a two-dimensional probability density, following the setup from \citet{wu2020snf}. 

\paragraph{Funnel.}
Following the methodology of \citet{berner2024dis}, we use the funnel distribution introduced from \citet{neal2003slice}. The distribution is defined as follows: 
\begin{equation*}
    \rho(\rvx) = \mathcal{N}(x_1; 0, v^2) \prod_{i=1}^d \mathcal{N}(x_i; 0, e^{x_1})
\end{equation*}
We set $d=10, v=3$. 

We include this benchmark as this is a canonical distribution used for comparing MCMC methods and has been used extensively within the growing field of learned diffusion samplers \citep{berner2024dis, zhang2022pis, vargas2023dds, richter2024improved}.

\paragraph{Many-Well.}
We use the many-well target distribution following the methodology of \citet{berner2024dis}:
\begin{equation*}
\rho(\rvx) = \exp \left(-\sum_{i=1}^m (x_i^2 - \delta) - \frac{1}{2} \sum_{i=m+1}^d x_i^2 \right).
\end{equation*}
For the target distribution labeled as MW-54, we set $d=5$, $m=5$, and $\delta=4$; for the target distribution labeled as MW-52, we set $d=50, m=5, \delta=2$. 

\paragraph{Log Cox Gaussian Process (LGCP).}
The log cox Gaussian process is a popular target distribution for benchmarking sampling methods due to its complexity and high-dimensionality. 
As discussed in \citet{zhang2022pis, chen2025SCLD}, the LGCP distribution is defined as follows: 
\begin{align*}
    \rho(x) = \mathcal{N} (x; \mu, \Sigma) \prod_{i=1}^d \exp \left(x_i y_i -\frac{\exp(x_i)}{d}\right).
\end{align*}
Here, $y$ is a given dataset, and $\mu, \Sigma$ are mean and covariance for some given prior. 
We follow the methodology of \citet{zhang2022pis, arbel2021annealed} for both the dataset and prior distribution.

\subsection{Training Details}
For GMM, image, funnel, and MW54, we train all diffusion samplers until convergence or for 30,000 training iterations. 
For MW52d, we train all samplers for 10,000 training iterations. For LGCP, we train all samplers for 5,000 training iterations. 

For a complete specification of sampler details, see Table \ref{tab:sampler_comparison}. 
For details on the global configurations used across all samplers, see Table \ref{tab:details}.

\begin{table*} 
    \centering
    \renewcommand{\arraystretch}{1.3}  
    \begin{minipage}[t]{.45\textwidth}
    \vspace{0pt} 
        \begin{tabular}{p{4cm} p{4cm}}  
            \toprule
            \multicolumn{2}{c}{\textbf{SCDS}} \\  
            \midrule
            Terminal Time & 1 \\
            SDE & VP SDE \\ 
            Terminal Time & 1 \\
            Time Schedule & Linear \\ 
            Initial Distribution & $\mathcal{N}(0, I)$ with Truncation Quartile of $1e-4$ \\
            Loss Function & Log-Variance, Time Reversal \citep{berner2024dis, richter2024improved}, Self-Consistency\\ 
            \midrule 
            \multicolumn{2}{c}{\textbf{CDDS}} \\  
            \midrule
            Pretrained Generative Ctrl & DIS \\ 
            Consistency Model Train Timesteps & 18 \\ 
            Loss Function & Equation \eqref{eq:cdloss} \\
            \midrule 
            \multicolumn{2}{c}{\textbf{DIS} \citep{berner2024dis}}  \\
            \midrule 
            SDE & VP SDE \\ 
            Loss Function & Log Variance, Time Reversal \\ 
            Terminal Time & 1 \\
            Time Schedule & Linear \\ 
            Initial Distribution & $\mathcal{N}(0, I)$ with Truncation Quartile of $1e-4$ \\
            \midrule
            \multicolumn{2}{c}{\textbf{PIS} \citep{zhang2022pis}}  \\
            \midrule 
            SDE & VE SDE \\ 
            Loss Function & Log Variance \citep{richter2024improved}\\ 
            Terminal Time & 1 \\
            Time Schedule & Linear \\ 
            Initial Distribution & Dirac-Delta  \\  
            \midrule
            \multicolumn{2}{c}{\textbf{DDS} \citep{vargas2023dds}}  \\
            \midrule 
            SDE & VP SDE \\ 
            Loss Function & Log Variance \\ 
            SDE & Exponential SDE \\
            Time Schedule & Cosine \\ 
            Terminal T & 12.8 \\ 
            $\Delta t$ & $.1$ \\
            Initial Distribution & $\mathcal{N}(0, I)$ with Truncation Quartile of $1e-4$ \\
            \bottomrule
        \end{tabular}
        \caption{Diffusion Sampler Configurations}
        \label{tab:sampler_comparison}
    \end{minipage} \hfill
    \begin{minipage}[t]{.45\textwidth}
    \vspace{0pt} 
        \begin{tabular}{p{4cm} p{4cm}}  
            \toprule
            \multicolumn{2}{c}{\textbf{Optimizer Settings}} \\  
            \midrule
            Optimizer & Adam \\ 
            Learning Rate & $.005$ \\ 
            Weight Decay & $1e-7$ \\
            Gradient Clipping & 1 \\
            $\beta_1, \beta_2$ & $.9, .999$ \\
            \midrule 
            \multicolumn{2}{c}{\textbf{Training Settings}} \\  
            \midrule
            Total Iterations & GMM, Image, Funnel, MW54=30,000; MW52=10,000; LGCP=5,000 \\
            Train Time Steps & 128 \\ 
            Batch Size & 2048 \\
            \midrule 
            \multicolumn{2}{c}{\textbf{Model Settings}} \\
            \midrule 
            Number of Layers & 4 \\ 
            Channels & 64 \\
            Time Conditioning & Fourier Time Embeddings \citet{tancik2020fourier} \\ 
            Activation & GeLU \\
            \multicolumn{2}{c}{\textbf{Evaluation Settings}} \\
            \midrule
            Batch Size & $10000$ \\
            Weight Decay & $1e-7$ \\ 
            \bottomrule
        \end{tabular}
        \caption{Global Configurations}
        \label{tab:details}
    \end{minipage}
\end{table*}

\begin{figure}[tb!]
\centering
\includegraphics[width=1.0\linewidth]{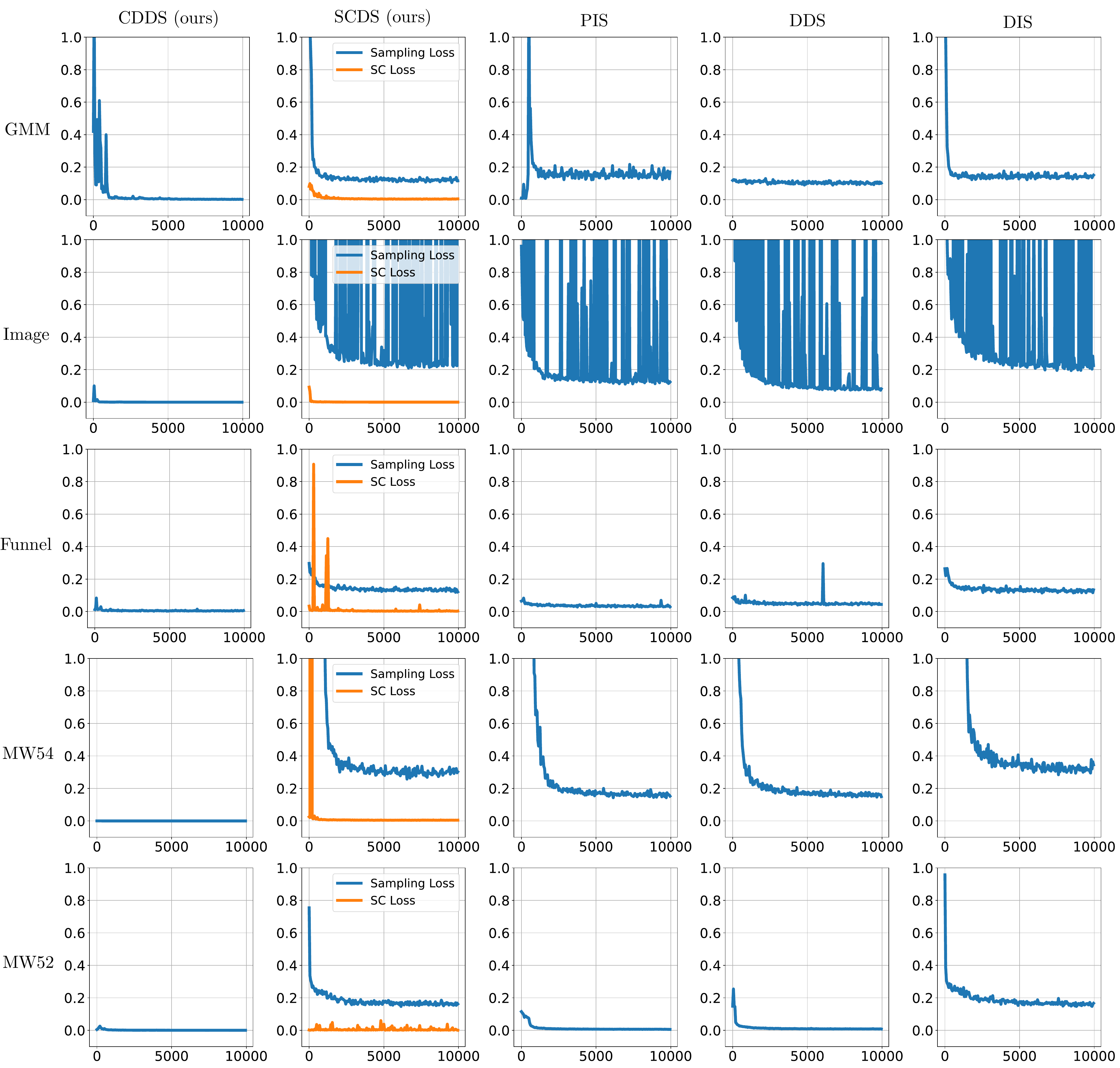}
\caption{Loss curves for the samplers studied in this paper. 
SCDS and CDDS exhibit stable learning across most settings, except for the image target distribution, where all samplers—except CDDS—show instability. 
Notably, the self-consistency loss and the sampling loss remain relatively independent.}
\label{fig:training}
\end{figure}

\end{document}